\newtheorem{theorem}{Theorem}
\newtheorem{remark}{Remark}
\newtheorem{proofpart}{Proof Part}
\begin{document}

%
\title{\Large \bf DCL-Sparse: Distributed Range-only Cooperative Localization of Multi-Robots in Noisy and Sparse Sensing Graphs  	}
\author{Atharva Sagale \and  Tohid Kargar Tasooji \and Ramviyas Parasuraman
\thanks{School of Computing, University of Georgia, Athens, GA 30602, USA.  \\
Corresponding Author Email: {\tt\small ramviyas@uga.edu}  
}}

\maketitle              

\begin{abstract}
This paper presents a novel approach to range-based cooperative localization for robot swarms in GPS-denied environments, addressing the limitations of current methods in noisy and sparse settings. We propose a robust multi-layered localization framework that combines shadow edge localization techniques with the strategic deployment of UAVs. This approach not only addresses the challenges associated with nonrigid and poorly connected graphs but also enhances the convergence rate of the localization process. We introduce two key concepts: the S1-Edge approach in our distributed protocol to address the rigidity problem of sparse graphs and the concept of a powerful UAV node to increase the sensing and localization capability of the multi-robot system. Our approach leverages the advantages of the distributed localization methods, enhancing scalability and adaptability in large robot networks. 
We establish theoretical conditions for the new S1-Edge that ensure solutions exist even in the presence of noise, thereby validating the effectiveness of shadow edge localization. Extensive simulation experiments confirm the superior performance of our method compared to state-of-the-art techniques, resulting in up to 95\% reduction in localization error, demonstrating substantial improvements in localization accuracy and robustness to sparse graphs. This work provides a decisive advancement in the field of multi-robot localization, offering a powerful tool for high-performance and reliable operations in challenging environments.
\end{abstract}

\begin{keywords}
Multi-Robot Systems, Cooperative Localization, Distributed Algorithm, Sparse Graphs
\end{keywords}


\section{Introduction}

Multi-robot systems (MRS) \cite{parker2008multiple,rizk2019cooperative, latif2023} play a crucial role in various real-world applications, such as urban search and rescue, mining, exploration, and environmental monitoring. In scenarios like search and rescue missions in disaster-stricken areas, robots can be deployed to explore inaccessible environments, such as collapsed buildings, dense forests, or underwater cave systems, where human intervention is difficult or dangerous. Specifically, in urban search and rescue scenarios, limited infrastructure—such as the absence of GPS and reliable communication networks—makes teleoperation by humans impractical due to input lag and weak connectivity. Under such conditions, teams of robots must rely on distributed approaches for cooperative localization, as the lack of exact coordinate information in a global frame of reference poses a major challenge \cite{murphy2017disaster}.
To overcome this challenge, the team of robots must decide on a common coordinate frame of reference and localize themselves in that new frame of reference, which will help achieve subsequent multi-robot tasks such as formation control, object manipulation, etc. Most often, UGVs and UAVs collaborate on the task for improved performance and capabilities \cite{kruijff2015tradr}. See Fig.~\ref{fig:overview} for a pictorial representation of this deployment.

\begin{figure}
    \centering
    \includegraphics[width=\linewidth]{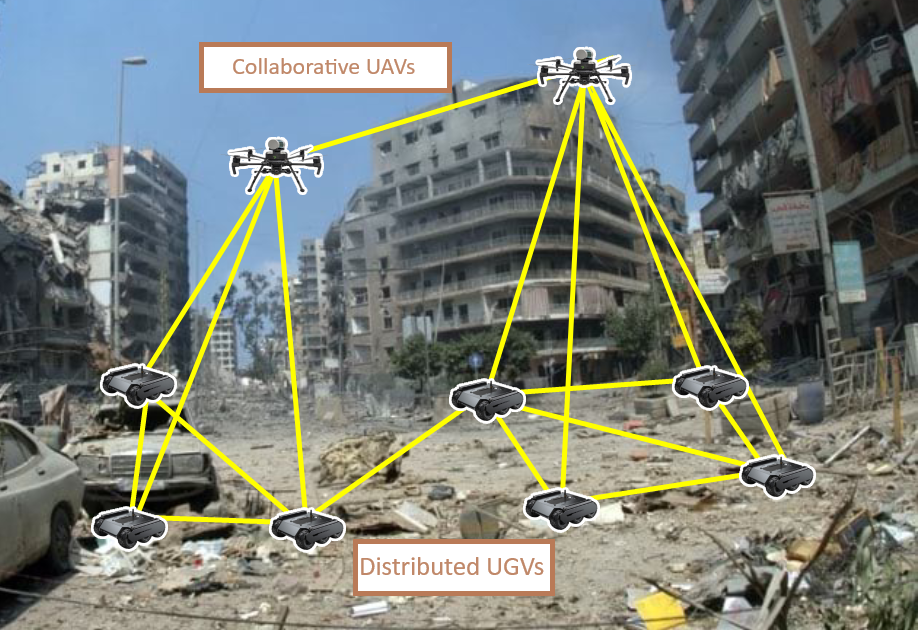}
    \caption{\footnotesize Illustration of the proposed UGV-UAV cooperative localization in a typical disaster scenario. Here, the links shown in "Yellow" are the edges of the sparsely-connected sensing and/or communication graph.}
    \label{fig:overview}
    \vspace{-6mm}
\end{figure}

As noted in \cite{eren2004rigidity,connelly2020generically}, a generic globally rigid (GGR) graph is a sub-type of graph which have been used in works such as \cite{calafiore2012distributed} to ensure the rigidity of the inter-robot connection for applications in distributed approaches. In scenarios where the graph's rigidity is insufficient, the network consensus may not converge but possibly be stuck perpetually in a local minimum \cite{soatti2014weighted,parasuraman2019consensus,tasooji2024event}. 
This is also possible in rigid-connected graphs due to low-edge connectivity. 
These issues are coined as the "flipping phenomenon" \cite{ping2020flipping}, where a vertex can be reflected across a face and still preserve the relative distance constraints (inter-robot range measurements) of the graph. 
In MRS, this scenario can occur if there is limited connectivity to preserve bandwidth or if the formation size (i.e., the configuration of the robot positions) exceeds the maximum range of the robot sensors \cite{kargar2022cooperative}, \cite{tasooji2023decentralized}, \cite{tasooji2024event}. This "flipping" can be caused by the ambiguity in the pose estimate of the neighboring robot, causing the cooperative localization estimates to converge at a local minimum, satisfying the edge constraints but not truly representing the ground truth conditions. Fig.~\ref{fig:flipping} depicts a motivating example.

Another limitation in multi-robot localization arises from the presence of noise in the localization process, which impacts the accuracy and reliability of position estimates. This noise can lead to erroneous distance measurements and pose estimates, ultimately degrading the precision of localization. In practice, localization algorithms may be sensitive to these inaccuracies, potentially causing the robots to converge on incorrect solutions. This can exacerbate issues such as the 'flipping phenomenon,' where, despite a globally rigid (GGR) graph, the system may still converge to a local minimum rather than the true solution.

{\color{black}
\textbf{Contributions}: To overcome the limitations of existing methods in noisy and sparsely connected environments, we introduce a novel range-based cooperative localization framework for robot swarms in GPS-denied settings. Unlike previous approaches, which often rely on leader selection \cite{cao2018dynamic} or edge connectivity manipulations \cite{calafiore2012distributed, engin2021establishing} and struggle with convergence in non-GGR guaranteed graph formations, our method leverages two key innovations: the S1-Edge approach and the integration of UAVs. The S1-Edge technique addresses the rigidity challenges of sparse graphs, improving structural stability, while UAVs, with their superior sensing capabilities and broader field of view, enhance the localization process by accelerating convergence even in environments with weak connectivity. This multi-layered framework combines shadow edge localization with the strategic deployment of UAVs, offering improved scalability and adaptability for large-scale multi-robot systems. Theoretical analysis establishes bounds on the gain coefficient for shadow edges, ensuring the algorithm's stability and accuracy in the presence of noise. Our contribution is further validated through extensive simulation experiments, demonstrating significant improvements in localization accuracy and robustness over the state-of-the-art.}

A video\footnote{\url{https://youtu.be/AhrEJp2i0fw}} demonstration of the simulations and real-world experiments show the approach's strengths and novelty.

\begin{figure}[tbp]
\centerline{\includegraphics[width=\columnwidth]{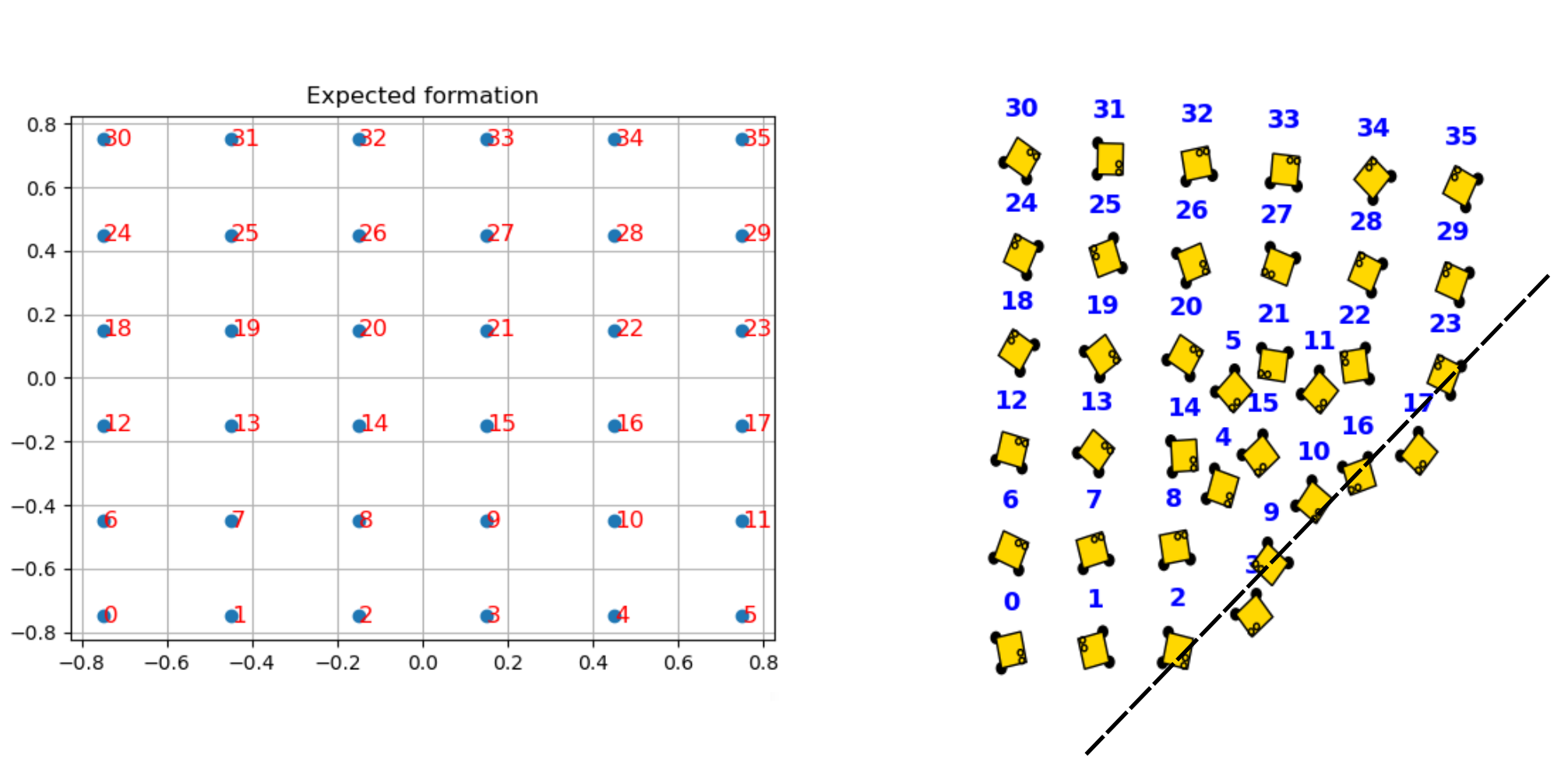}}
\vspace{-4mm}
\caption{\footnotesize Flipping of sparse (non-rigid) sensing graphs with ground truth location (left) and estimated (flipped due to sparsity of range inference) locations (right). Robots 4,6, 10, and 11 are flipped along the axis connecting the robots 2 and 23 in this example.}
\label{fig:flipping}
\vspace{-6mm}
\end{figure}

\section{Related work}
{\color{black} In the literature, cooperative localization has been extensively studied, with various methods proposed for MRS \cite{prorok2012low,nguyen2021flexible,wanasinghe2015relative, chai2014consensus, soatti2014weighted, shen2024theoretical, chakraborty2020cooperative, guo2019localization, de2015ultrasonic}. Wiktor and Rock \cite{wiktor2020ICRA} introduced a Bayesian optimization-based method for collaborative localization in natural terrain, but it involves high complexity and computational costs.

In \cite{latif2022dgorl}, the authors suggested a graph-theoretic approach that addressed some scalability issues but still faced high computational complexity and required extensive and predictive range (or RSSI) sensing, which may be impractical in resource-constrained systems. On the contrary, a range-only sensor could be more practical for MRS operations \cite{quattrini2020multi}. 
Authors in \cite{calafiore2012distributed,latif2023instantaneous} proposed variants of gradient-based approaches for localization in geometrically rigid formations, using stationary beacons for accurate position information. However, these methods require multiple specialized robots (or anchor nodes) and rely on a fully connected graph, making them less practical in large MRS applications. In contrast, our approach only needs a single UAV for a group of UGVs, covering larger areas more efficiently with less bandwidth. It minimizes frame-of-reference reassignment and reduces communication rounds for trilateration.

Cao \emph{et al.} \cite{cao2018dynamic} developed an algorithm where a leader and two reference robots are elected, and trilateration is used for coordinate determination. This method assumes all robots can communicate with reference robots and may need to re-elect a leader if outlier robots cannot communicate. Our distributed controller operates effectively in scenarios with limited connectivity, using anchors and range-only measurements to achieve localization in sparse graphs. 

Oliva \emph{et al.} \cite{oliva2015sensor,oliva2015noisy} proposed a localization method using shadow edges to relax the global rigidity condition in unit disk graphs. They introduced an algorithm with a necessary and sufficient localizability condition, showing that shadow edge localization can succeed where trilateration fails. 
Their method, effective in noisy settings, localizes more nodes and achieves lower localization error compared to trilateration, particularly after recursive least squares post-processing. But, their shadow edges are centralized and not limited to 1-hop neighbors, which limits the distributed application. 
In contrast, our method combines distributed shadow 1-hop edges with UAV deployment for robot swarms in GPS-denied environments, addressing nonrigid graphs and improving convergence and accuracy. Our approach ensures robust performance in noisy conditions and scales effectively in large networks.

Park \emph{et al.} \cite{isprs-annals-V-1-2020-317-2020} proposed a multilateration approach for UAV positioning using Ultrawide-band (UWB) ranging to address flip ambiguity caused by planar anchor configurations. Their method involves an algebraic solution, flip correction through symmetric reflection, and non-linear optimization, collectively enhancing positioning accuracy. Their approach relies heavily on precisely known anchor locations and assumes a relatively low level of noise in the UWB ranging measurements. However, most real-world deployments involve high measurement noise, environmental interference, and dynamic obstacles that can degrade range sensor performance, leading to inaccuracies in positioning.

Moreover, these strategies generally assume ideal geometric anchor arrangements and do not account for irregular or sparse node distributions, which are often encountered in practical UAV navigation scenarios. This limitation makes their solution less robust for dynamic and complex environments where robot placement cannot always conform to a predefined configuration. Furthermore, some of these methods incur non-linear optimization, which suffers from computational overhead, limiting the method's real-time applicability for UAVs operating in time-critical missions. 
Our approach addresses these shortcomings by combining shadow edge localization with UAV deployment in a distributed protocol. This provides a scalable and efficient solution for MRS operating in GPS-denied environments. This method reduces computational and communication demands while enhancing localization accuracy and robustness, representing a notable advancement in practical MRS.

}

\section{Problem Formulation}

In this paper, we are concerned with the graph localization problem with relative range measurements $z$, which can be represented as follows.
Let there be a set of \(n\) nodes (representing the robots) \(V={v_1,...,v_n}\)  in a fixed formation shape (static configuration) and let \(P={p_1,...p_n}\) be a set of positions on the Cartesian plane, where \(x_{i} = p_i = [p_{i,x},p_{i,y}]^T \) represent the 2D coordinates of the \(i^{th}\) node. For an expected, time-invariant formation, the main assumption is that each robot has a range sensor that can provide sensor readings within a limited range; 
Following the unit disk sensing graph, the robots form a connected sensing graph ${G} = (V, E)$ defined by their structural edges $E = \{(i,j) | j \in {N}_i \iff |p_i - p_j| \leq R\}$, where $R$ is the sensing range, ${N}_i$ is the neighbor set of robot $i$ in the graph ${G}$.

\begin{enumerate}
    \item All nodes start with a random position estimate $\hat{x}_i(0)$.
    \item For a node \(i\in \{1, 2,...,n \} \), the neighbors of \(i\) can be measured, and the node adjusts its position estimate based on this information.
    \item In the graph \(G\), only the relative distance measurements are available, and if a node \(i\) can detect its neighbor \(j\), then \(j\) must also be able to detect \(i\). 
    \item There may be special nodes called UAVs in the graph. UAV \(e\in V\), has strong connectivity, i.e., it can be detected by every node in the formation.
    \item If a robot can sense its neighbor, they both can communicate information. i.e., the sensing graph and communication graph have the same edges.
    \item \color{black}{ A network of \( n \) nodes is globally rigid (GGR) if it is defined by a set of pairwise distance measurements \( \{d_{ij}\} \) such that the configuration of the nodes is unique up to global rigid transformations (translations and rotations), and the distance measurements are minimal, meaning that the network has a minimum radius of distance constraints required to ensure this rigidity.}
\end{enumerate}
The localization problem is to determine the $x_i$ for each node in a new cooperative frame of reference.

\section{Approach}
Each robot iteratively updates its position estimates $\hat{x}_i$ at \(\tau\) to establish a common coordinate frame of reference. 

\subsection{Distributed Range-only Localization}
Through local neighborhood communication, robots iteratively update their position estimates by sharing current estimates and range sensor readings with the neighbors using the following update rule \cite{calafiore2012distributed}:
\begin{equation}
 \begin{array}{l}
\hat{x}_i(\tau + 1) = \hat{x}_i(\tau) +  \mathlarger{\mathlarger{\sum}}_{j \in N_i}  \bigg [ \alpha \Lambda_{ij}(\tau) \cdot \Big (\hat{x}_j(\tau) - \hat{x}_i(\tau) \Big )  \bigg ]
\end{array}
\label{eqn:distributed}
\end{equation}

The gain for direct edges is denoted by \( \alpha \).
The term \( \Lambda_{ij} \) represents the inconsistency between the range sensor readings \( z_{ij} \) (measurable for $(i,j) \in E$) and the current position estimates \( \hat{x}_i \) and \( \hat{x}_j \) at time \( \tau \), and is given by:
\begin{equation}
\Lambda_{ij}(\tau) = \|\hat{x}_j(\tau) - \hat{x}_i(\tau)\|^2 - z_{ij}^2(\tau)
\label{eqn:lambda}
\end{equation}
Previous localization approaches, such as the one described in Eq.~\eqref{eqn:distributed}, face significant challenges with incomplete sensing and the absence of direct communication between robots. These limitations can result in inaccurate or unstable localization estimates due to the reliance on direct measurements and the assumption of a fully connected network. Such assumptions are often impractical in real-world scenarios, where sparse sensing results in "flipping" problems.

\subsection{Distributed Localization With Shadow 1-Hop Edge}

We address the limitations of the previous localization approach outlined in Eq.~\eqref{eqn:distributed} and propose a new algorithm based on the concept of distributed shadow edges for scenarios with sparse direct sensing. 
A significant challenge in this process is the issue of sparsity. Without a global reference frame (GPS), the robots need to establish a reliable communication network to ensure that they can accurately localize themselves and one another. Connectivity issues can complicate the process, as some robots may be out of range or unable to communicate with others \cite{pandey2022empirical}, leading to incomplete or unreliable distance information. To address this challenge, we introduce the concept of shadow edge localization. In this context, certain edges—like the one depicted by the blue dotted line in Fig. \ref{fig:s1-bounds}, are referred to as S1-Edges. These S1-Edges help address sparsity issues by providing additional reference points, aiding the robots in defining their local coordinate frame and achieving more accurate localization despite the challenges of non-ggr sensing graphs. 

\begin{figure}
    \centering
    \includegraphics[width=0.9\linewidth]{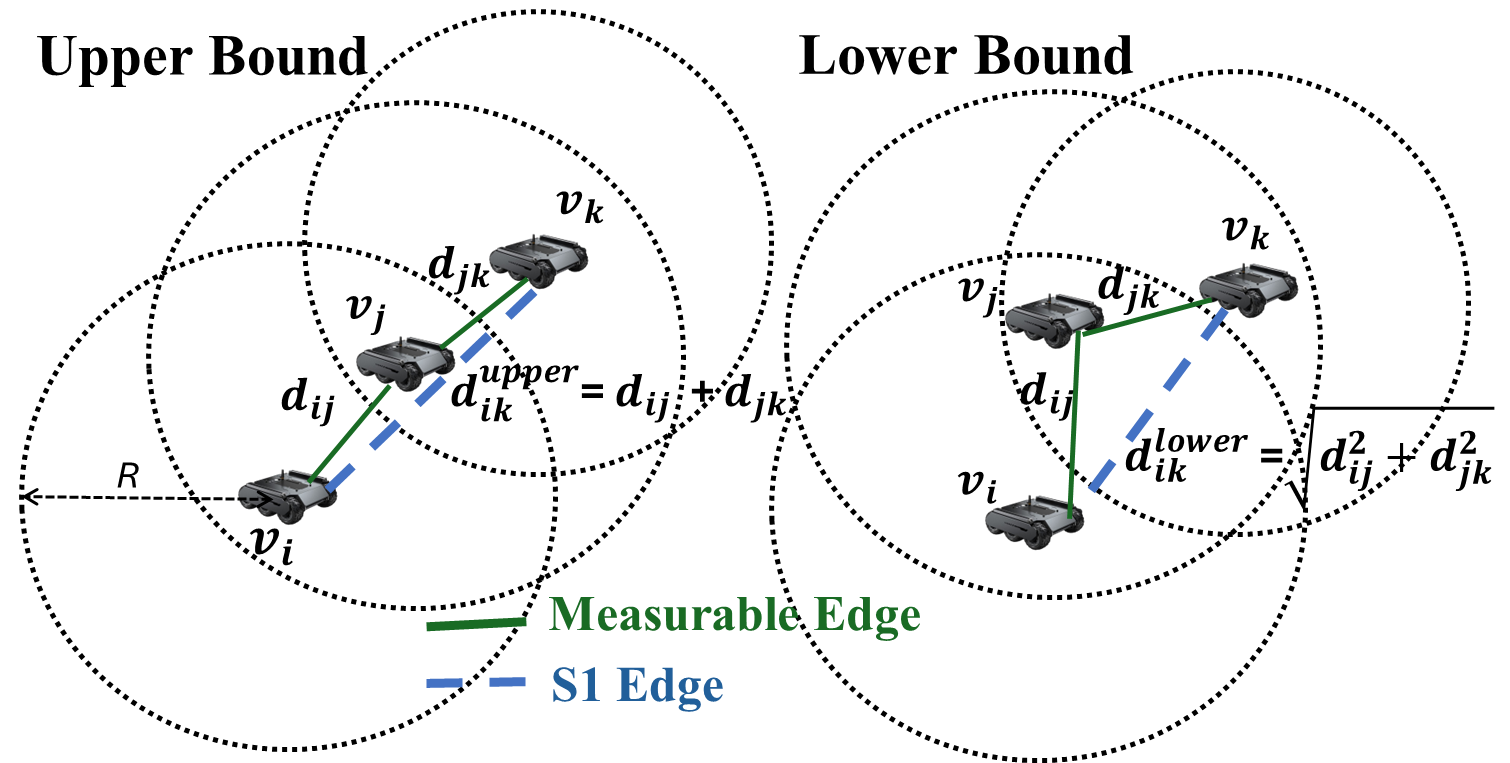}
    \caption{\footnotesize Localization of robots in a network with a unit disk graph where the associated graph G is not ggr. Robots \( r_i \), \( r_j \), and \( r_k \), represented by nodes \( v_i \), \( v_j \), and \( v_k \). Robot \( r_i \) can detect robot \( r_j \), but cannot detect robot \( r_k \) (i.e., node \( v_k \) is outside the dashed circle). The blue dotted line indicates a potential shadow edge that might be formed between \( v_i \) and \( v_k \). Based on the distances \( d_{ij} \) and \( d_{jk} \), upper and lower bounds for \( d_{ik} \) are defined.}
    \label{fig:s1-bounds}
    \vspace{-4mm}
\end{figure}
\par

\textbf{Definition 1: Shadow 1-hop edge (S1-Edge):} 
In graph $G$, a \textit{Shadow 1-Hop Edge} exists between a pair \( (i, k) \notin E \) when robot \( k \) is indirectly connected to robot \( i \) via an intermediate node \( j \in N_i \), where \( N_i \) is the set of 1-hop neighbors of \( i \).
For robots \( i \), \( j \), and \( k \), where \( j \in N_i \) and \( k \in N_j \), the distance \( d_{ik} \) is bounded by $d_{ik}^{\text{upper}} \leq d_{ik} \leq d_{ik}^{\text{upper}}$:
\begin{eqnarray}
\begin{split}
\textbf{Upper Bound:} & d_{ik}^{\text{upper}} & = d_{ij} + d_{jk} \\
\textbf{Lower Bound:} & d_{ik}^{\text{lower}} & = \sqrt{d_{ij}^2 + d_{jk}^2}, 
\end{split}
\label{eqn:bounds}
\end{eqnarray}
where \( d_{ij} \) and \( d_{jk} \) are the Euclidean distances between the consecutive robots that share an edge $E$.
The Upper bound ($d_{ij}^{\text{upper}}$) is derived from the co-linear property, while the lower bound ($d_{ij}^{\text{lower}}$) is derived from the triangle inequality, as depicted in Fig.~\ref{fig:s1-bounds}:

The S1-Edge concept enables flexible distance estimation between non-adjacent robots via intermediate nodes. The upper and lower bounds provide key insights into accuracy and feasibility. These bounds ensure reliable estimation when direct measurements are unavailable, with the upper bound covering worst-case scenarios and the lower bound reflecting the shortest geometric distance. This approach guarantees convergence and stability in MRS, especially where maintaining communication is critical. Limiting shadow edges to 1-hop neighbors simplifies implementation and ensures scalability in distributed frameworks.

\noindent \textbf{Distributed Protocol} Our algorithm is defined by the following update rule, which allows a distributed utilization of the S1-Edges integrated in the distributed protocol of \eqref{eqn:distributed}.

\begin{eqnarray}
\hat{x}_i(\tau + 1) = \hat{x}_i(\tau) +  \mathlarger{\mathlarger{\sum}}_{j \in N_i}  \bigg [ \alpha \Lambda_{ij}(\tau) \cdot \Big (\hat{x}_j(\tau) - \hat{x}_i(\tau) \Big )   \nonumber \\  \ \ \ +  \mathlarger{\mathlarger{\sum}}_{k \in N_j/N_i } \beta S_{ik}(\tau) \Lambda_{ik}(\tau) \Big (\hat{x}_k(\tau) - \hat{x}_i(\tau) \Big ) \bigg ] ,
\label{eqn:n+} 
\end{eqnarray}
The set of shadow S1-Edges represented as \( k \in N_j / N_i \), representing shadow edges that are not in the direct neighborhood set \( N_i \) of robot \( i \) and  \( S_{ik} \) represents the virtual interaction between robot \( i \) and robot \( k \) (i.e., $k \notin N_i \textit{ if } |x_i - x_k| > R$). Note that \( S_{ik}(\tau) =1 \) if the robot \( i \) and robot \( k \) are within the range $R$ of each other as per the current estimate $\hat{x}(\tau)$ (but they are not in reality); otherwise, it is set to zero. 
The gain for shadow edges is denoted by \( \beta \).

Note that $\Lambda_{ik}(\tau)$ is not directly available as $z_{ik}$ is not measurable per the sensing graph $G$ since $(i,k) \notin E$. However, we leverage the knowledge of the S1-Edge bounds we established earlier to obtain an estimate $\hat{z}_{ik}$ of the virtual edge $(i,k)$ by averaging the upper and lower bounds of $d_{ik}$ \eqref{eqn:bounds} using the range measurements of $(i,j)$ and $(j,k) \in E$, which are available or sharable in 1-hop (distributed) manner per $G$.
\begin{equation}
    \hat{z}_{ik} = \frac{1}{2} [(z_{ij} + z_{jk}) + (\sqrt{z^2_{ij} + z^2_{jk}})]
\end{equation}
Accordingly, we use $\Lambda_{ik}(\tau) = \|\hat{x}_j(\tau) - \hat{x}_k(\tau)\|^2 - \hat{z}_{ik}^2(\tau)$.
Here, the information about $\hat{x}_k(\tau)$ is communicated to the node $i$ by its 1-hop neighbor node $j$ to realize the algorithm as a distributed implementation.

The S1-Edge term \(  \sum_{k \in N_j/N_i} \beta S_{ik} \Lambda_{ik} (\hat{x}_k - \hat{x}_i) \) is essential for improving localization performance by leveraging additional virtual connections, thus enhancing estimate accuracy in sparse and poorly connected graphs. 
In other words, the S1-Edge term significantly compensates for the lack of direct measurements by incorporating indirect connections, thereby improving localization accuracy. This iterative process allows robots to achieve consensus, aligning their estimates with range measurements and establishing a common reference frame for relative positioning.

\subsection{UAV Nodes to Speed up the Convergence}
\label{sec:emitter}
To speed up the convergence process further, UAVs (powerful sensing nodes, aka emitters) are introduced to the network. They provide extended-range detection and overcome the communication limits of edge neighbors. 
Mathematically, we assume there is a UAV robot \(e\in V\), which can sense the range of all robots (or a subset of robots in a distributed setting). i.e., $ (i,e) \in E \forall i \in V$.
Placed strategically in the robot formation, The reflection of some nodes as emitters modifies the adjacency matrix, thereby enhancing connectivity and convergence rates. Experimental data supports this approach, demonstrating that powerful sensing nodes help apply in systems with noisy measurements and facilitate faster convergence to a global minimum. We use one UAV as an emitter in this paper and propose a variant "Emitter-based localization", which applies the distributed localization in \eqref{eqn:distributed} without considering the S1-Edges. 

\subsection{DCL-Sparse:}
The proposed DCL-Spare algorithm combines the advantages of S1-Edge and the UAV node together, and the whole process is summarized in the pseudocode of Alg.~\ref{alg:dcl-sparse}.

\vspace{-4mm}
\begin{algorithm}
\caption{Distributed DCL-Sparse Localization}
\label{alg:dcl-sparse}
\footnotesize 
\begin{algorithmic}
    \State \textbf{Initialize:}
    \State $V \gets \{v_1, v_2, \ldots, v_n\} \gets$ {Nodes (robots)}
    \State $X = (\hat{x}_1,\hat{x}_2,\ldots,\hat{x}_n) \gets$ Initial position estimates (random) 
    \State $G \gets (V, E) \gets$ {Graph based on sensing range}
    \State $\alpha, \beta \gets$ Gain factors
    \State \textbf{Main Loop:}
    
    \For{$\tau = 0$ \textbf{to} max\_iterations}{
        \State \textit{{  }Distributed protocol applied at each robot $i$} 
        
        \For{each $i \in V$}{ 
            \State \textit{Information Propagation of Self + 1-Hop Estimates}
            
            \For{$j \in N_i$  }{ 
                \State \textbf{Send:} $\hat{x}_i$, $(z_{ij},\hat{x}_j) \, \forall j \in N_i$ to $j$ 
                \State \textbf{Receive:} $\hat{x}_j$, $(z_{jk},\hat{x}_k) \, \forall k \in N_j$ from $j$ 
            }
            \State \textit{Include UAV node in the neighbor set}
            
            \For{$j \in N_i \cup \text{UAV}$}{ 
                \State $\Lambda_{ij} \gets \|\hat{x}_j - \hat{x}_i\|^2 - z_{ij}^2$
                \State $\hat{x}_i \gets \hat{x}_i + \alpha \Lambda_{ij} (\hat{x}_j - \hat{x}_i)$
                
                \For{$k \in N_j \setminus N_i$}{
                    \If{$\|\hat{x}_i - \hat{x}_k\| < R$ and $\exists j \in N_i \cap N_k$}{
                        \State $\Lambda_{ik} \gets \|\hat{x}_k - \hat{x}_i\|^2 - \hat{z}_{ik}^2$
                        \State $\hat{x}_i \gets \hat{x}_i + \beta S_{ik} \Lambda_{ik} (\hat{x}_k - \hat{x}_i)$ 
                    }
                }
            }
        }
    }
\end{algorithmic}
\end{algorithm}
\vspace{-6mm}

\subsection{Theoretical Analysis}
In our DCL-Sparse approach, we combine the S1-Edge approach and the Emitter-based approach. This combination leverages the advantages ofof both methods and helps improve convergence and localization accuracy in sparse and noisy sensing graphs. We present the main results.

\begin{theorem}
    Consider a unit disk graph \( G = (V, E) \), where \( V = \{r_1, r_2, \dots, r_n\} \) represents robots in a network and \( E \) represents relative distance measurements based on a threshold \( \rho \). Let \( E_s \) be the set of S1-Edges, representing virtual connections between robots without direct relative distance measurements. The gain coefficient \( \beta \) associated with each S1-Edge must satisfy the following bounds to ensure stable and accurate localization:
\begin{equation}
    \frac{1}{\max(\eta_{ik}, \epsilon)} \leq \beta \leq \frac{1}{\left(\frac{d_{ij} + d_{jk}}{2}\right) + \eta_{ik}} ,
\end{equation}
where:
    \begin{equation}
    \begin{array}{l}
\eta_{ik} = 2 d_{kj} d_{ij} \cos\left( a_{hkj} - a_{hij} \right) + \eta_{kj} + \eta_{ij} \\ \ \ \ \  - 2 \sqrt{(d_{kj}^2 + \eta_{kj})(d_{ij}^2 + \eta_{ij})} \cos\left( \hat{a}_{hkj} - \hat{a}_{hij} \right)
\end{array}
\end{equation}
\begin{equation}
    \begin{array}{l}
\hat{a}_{abc} = \cos^{-1} \left( \frac{d_{ab}^2 + \eta_{ab} + d_{ac}^2 + \eta_{ac} - d_{bc}^2 - \eta_{bc}}{2 \sqrt{(d_{ab}^2 + \eta_{ab})(d_{ac}^2 + \eta_{ac})}} \right)
\end{array}
\end{equation}
    \begin{itemize}
        \item \( \eta_{ik} \) is the noise associated with the shadow edge between robots \( r_i \) and \( r_k \),
            \item \( \eta_{kj} \) and \( \eta_{ij} \) are the noise terms associated with the distance measurements from \( r_k \) and \( r_i \) to \( r_j \), respectively.

        \item \( d_{kj} \) and \( d_{ij} \) are the relative distances from \( r_k \) and \( r_i \) to a common neighboring robot \( r_j \),
        \item \( \epsilon \) is a small positive constant to avoid division by zero when \( \eta_{ik} \) is extremely small,
         \item \( a_{hkj} \) and \( a_{hij} \) are the angles between the lines connecting (\( r_h \),\( r_k \)) and (\( r_h \),\( r_i \)), and the line connecting (\( r_h \),\( r_j \)),
        \item \( \hat{a}_{hkj} \) and \( \hat{a}_{hij} \) are the estimated angles between the lines connecting (\( r_h \),\( r_k \)) and (\( r_h \),\( r_i \)), respectively.
\end{itemize}
These bounds ensure that the shadow edge contributions to the localization process are neither negligible nor disproportionately large, leading to stable and reliable convergence.
\label{thm:bounds}
\end{theorem}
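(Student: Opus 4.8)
The plan is to treat the shadow-edge term in \eqref{eqn:n+} as a gradient-descent step and then extract the admissible range of $\beta$ from two competing requirements: a contraction (stability) condition yielding the upper bound, and a significance (accuracy) condition yielding the lower bound. First I would observe that the increment $\beta S_{ik}\Lambda_{ik}(\hat{x}_k-\hat{x}_i)$ is exactly the negative gradient, with respect to $\hat{x}_i$, of the local cost $\tfrac14\Lambda_{ik}^2=\tfrac14(\|\hat{x}_k-\hat{x}_i\|^2-\hat{z}_{ik}^2)^2$, so each shadow edge applies a gradient step of size $\beta$. Writing the measured squared ranges as $z_{ij}^2=d_{ij}^2+\eta_{ij}$ and $z_{jk}^2=d_{jk}^2+\eta_{kj}$, the estimated shadow range $\hat{z}_{ik}$ from \eqref{eqn:bounds} inherits a composite error that I denote $\eta_{ik}$, and the whole statement reduces to controlling this one geometric quantity.

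The first technical step is the noise-propagation computation producing the stated closed form for $\eta_{ik}$. I would fix the auxiliary reference node $r_h$ and write the true squared shadow distance by the law of cosines as $d_{ik}^2=d_{kj}^2+d_{ij}^2-2d_{kj}d_{ij}\cos(a_{hkj}-a_{hij})$, expressing the enclosed angle as the difference of the two bearings measured from $r_h$. Substituting the noisy squared ranges $d^2+\eta$ for the true ones and the estimated bearings $\hat a$ (themselves obtained by applying the law of cosines to the noisy triangle, which gives the formula for $\hat a_{abc}$) for the true bearings yields the perturbed distance; subtracting the true and perturbed expressions and collecting terms delivers exactly the displayed $\eta_{ik}$, with the final term carrying the $\sqrt{(d_{kj}^2+\eta_{kj})(d_{ij}^2+\eta_{ij})}$ factor.

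With $\eta_{ik}$ in hand I would establish the two bounds separately. For the upper bound I would compute the Hessian of the local cost with respect to $\hat{x}_i$, namely $2(\hat{x}_k-\hat{x}_i)(\hat{x}_k-\hat{x}_i)^\top+\Lambda_{ik}I$, whose largest eigenvalue near the fixed point is governed by the estimated range scale $\tfrac{d_{ij}+d_{jk}}{2}$ (the dominant half of $\hat{z}_{ik}$) inflated by the residual noise $\eta_{ik}$; requiring the gradient step to be a contraction, i.e. $\beta$ below the reciprocal of this effective Lipschitz constant, gives $\beta\le\bigl(\tfrac{d_{ij}+d_{jk}}{2}+\eta_{ik}\bigr)^{-1}$. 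For the lower bound I would demand that the shadow correction not be swamped by measurement noise: the per-step displacement must dominate the noise floor $\eta_{ik}$, forcing $\beta\ge 1/\max(\eta_{ik},\epsilon)$, where $\epsilon$ merely guards the degenerate case of a nearly noiseless edge.

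The main obstacle I anticipate is the noise-propagation step: carrying the perturbations through the law of cosines while retaining both the range errors $\eta_{ij},\eta_{kj}$ and the bearing errors $\hat a-a$ to the relevant order, and then showing that the resulting $\eta_{ik}$ is precisely the quantity that controls both the contraction constant and the noise floor, is the delicate bookkeeping on which the statement rests. A secondary subtlety, which I would address explicitly, is verifying that the two reciprocals define a nonempty interval in the operating regime; I expect this to require bounding $\eta_{ik}$ against the range scale $\tfrac{d_{ij}+d_{jk}}{2}$ rather than substituting directly, and to hold only under the noise conditions assumed for the S1-Edge construction.
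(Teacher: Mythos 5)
Your overall framing --- linearize the update around the fixed point and read off a step-size (contraction) condition for the upper bound, plus a ``significance'' condition for the lower bound --- is the same skeleton as the paper's proof, which writes the stacked error dynamics as $e(\tau+1) = (I + \alpha L_d + \beta L_s)e(\tau)$, imposes the spectral-radius condition $\rho(I+\alpha L_d + \beta L_s) < 1$ (equivalently, eigenvalues of $\alpha L_d + \beta L_s$ in $(-2,0)$), and then simply \emph{asserts} the two bounds with one-sentence justifications. Your per-edge Hessian is exactly the block that would build $L_s$, and your law-of-cosines propagation of the range noise into $\eta_{ik}$ is more than the paper provides (its proof never derives that expression at all), so in those respects your plan is more substantive than the published argument.

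The gap is that neither of your two mechanisms actually lands on the stated constants. At the fixed point ($\Lambda_{ik}=0$) the Hessian you compute reduces to $2(\hat{x}_k-\hat{x}_i)(\hat{x}_k-\hat{x}_i)^{\top}$, whose largest eigenvalue is $2\hat{z}_{ik}^2$; the contraction condition is therefore $\beta \lesssim 1/\hat{z}_{ik}^2$, which scales as the \emph{inverse square} of the range, not as $\bigl(\tfrac{d_{ij}+d_{jk}}{2}+\eta_{ik}\bigr)^{-1}$, which is linear in it. The phrase ``governed by the estimated range scale inflated by the residual noise'' is precisely the step that would have to produce the claimed denominator, and it does not. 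Likewise, ``the per-step displacement must dominate the noise floor'' is not a derivation of $\beta \ge 1/\max(\eta_{ik},\epsilon)$: the displacement is $\beta\,|\Lambda_{ik}|\,\|\hat{x}_k-\hat{x}_i\|$, and comparing it to $\eta_{ik}$ leaves the factor $|\Lambda_{ik}|\,\|\hat{x}_k-\hat{x}_i\|$ unaccounted for. Your secondary worry is in fact the decisive one: in the low-noise regime the lower bound tends to $1/\epsilon$ while the upper bound stays of order $1/d$, so the admissible interval is empty and no $\beta$ satisfies both. The paper's proof never confronts this either, but a blind reconstruction cannot close it, because the bounds as stated do not follow from --- and are not mutually consistent under --- the stability analysis that both you and the paper invoke.
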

\begin{proof}
    The localization process is governed by the update rule \eqref{eqn:n+}:
    Define the error vector as:
    \begin{equation}
    e(\tau) = \hat{X}(\tau) - X^*
    \end{equation}
    The error dynamics are $e(\tau+1) = (I + \alpha L_d + \beta L_s) e(\tau)$.
    
    For stability, the spectral radius must satisfy \( \rho(I + \alpha L_d + \beta L_s) < 1 \).
    \begin{proofpart}
        \textbf{Lower Bound on \( \beta \)}: 
        $\beta \geq \frac{1}{\max(\eta_{ik}, \epsilon)}$.
        
        This lower bound prevents the S1-Edges from being ignored due to large noise, ensuring they always contribute to the update rule in \eqref{eqn:n+}.
    \end{proofpart}
     \begin{proofpart}
        \textbf{Upper Bound on \( \beta \)}:
        $\beta \leq \frac{1}{\left(\frac{d_{ij} + d_{jk}}{2}\right) + \eta_{ik}}$.
        
        This bound limits the influence of S1-Edges, preventing instability when relative distances are large or noise is low.
    \end{proofpart}
    \begin{proofpart}
        \textbf{Convergence and Stability Analysis}:
        The spectral radius of \( I + \alpha L_d + \beta L_s \) must be less than 1 for stability. The eigenvalues \( \lambda_i \) of \( \alpha L_d + \beta L_s \) must satisfy $-2 < \lambda_i < 0 $. 
        Ensuring \( \beta \) is within the bounds guarantees stability and convergence of the error \( e(\tau) \) to 0, leading to an accurate cooperative localization system in MRS.
    \end{proofpart}
    \vspace{-4mm}
\end{proof}
\begin{remark}
    The proposed localization framework ensures robustness and stability by carefully bounding the gain coefficient \( \beta \) associated with shadow edges. The lower bound on \( \beta \) prevents underutilization of shadow edge information, maintaining efficiency even in the presence of noise. Conversely, the upper bound prevents shadow edges from dominating the localization process, ensuring that direct relative distances retain significance and the system is not destabilized by excessive shadow edge corrections. The stability of the algorithm is guaranteed by ensuring that the eigenvalues of the system matrix \( I + \alpha L_d + \beta L_s \) remain within the range \((-2, 0)\), which supports convergence to true robot positions despite noisy conditions. Noise perturbations, both in direct distance measurements and shadow edges, are accounted for, and the spectral radius condition \( \rho(I + \alpha L_d + \beta L_s) < 1 \) ensures that deterministic error components decay over time. As long as the noise remains bounded, the algorithm maintains stable convergence, with the final error confined within noise-defined limits. This approach effectively mitigates the impact of noise, ensuring stable and convergent localization.
\end{remark}
\begin{remark}
    The introduction of a UAV/Emitter (Sec.~\ref{sec:emitter}) significantly alters the graph Laplacian, thereby enhancing sensing density. This alteration has a direct impact on both the estimation error and the convergence rate of the localization. Specifically, improved connectivity through the emitter accelerates convergence and reduces estimation error, leading to more accurate and efficient localization performance.
\end{remark}


\begin{figure*}[t]
    \centering
    \includegraphics[width=0.45\linewidth]{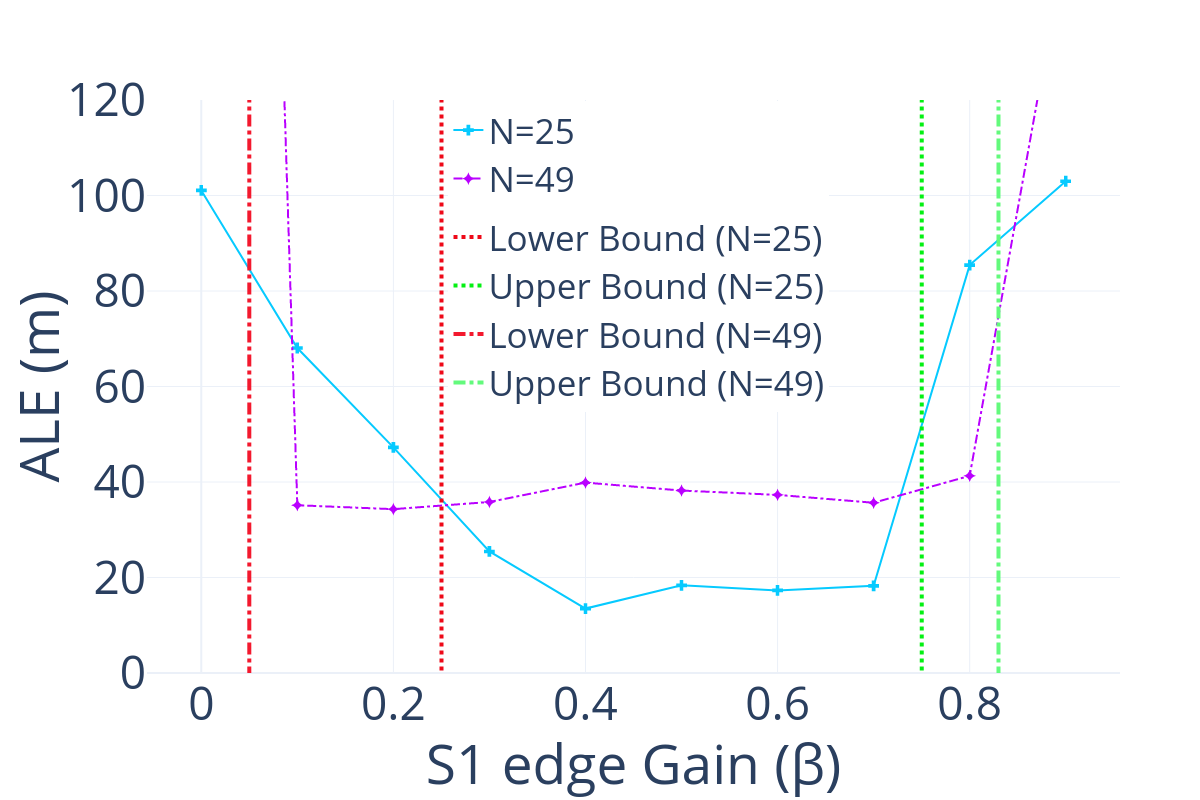}
    \includegraphics[width=0.45\linewidth]{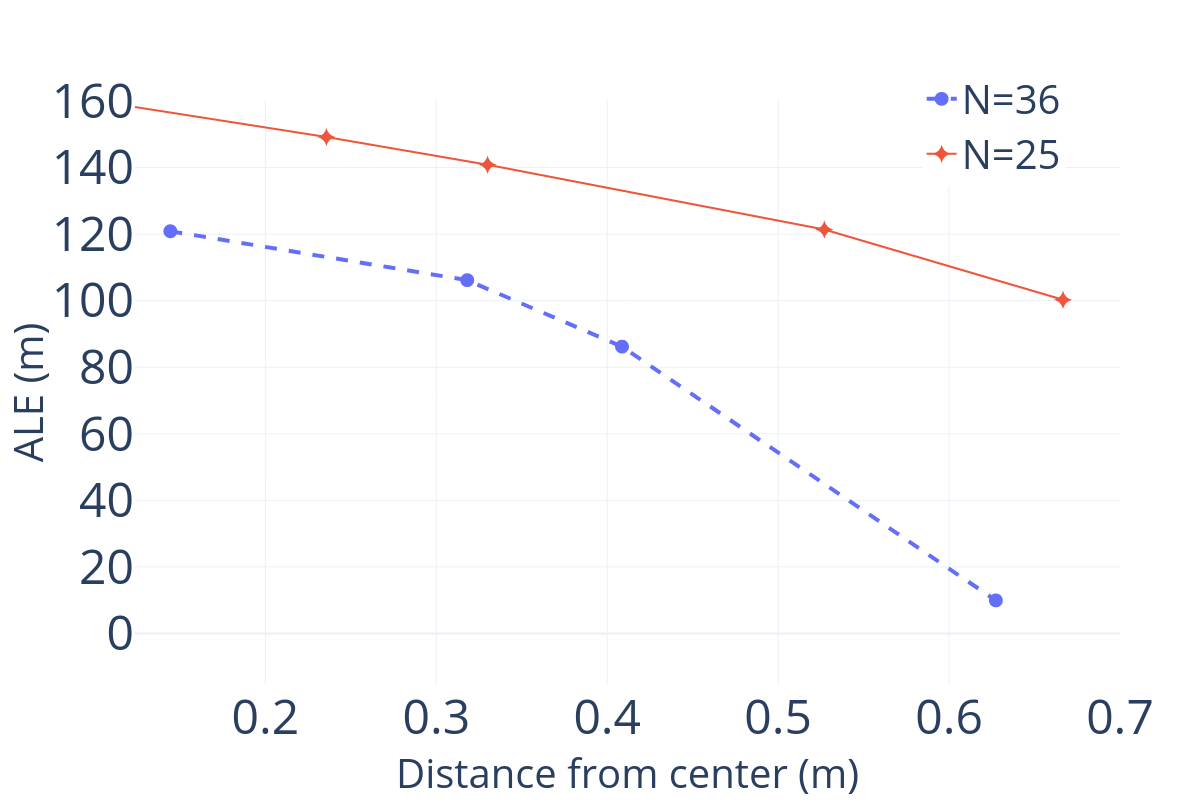}
     \caption{ (a) Confirming the bounds of S1-Edge gain (i.e., $\beta$ in \eqref{eqn:n+}) in the DCL-sparse approach. (b)  Determining optimal placement for the Power node among the robots.}
     \label{fig: Gain}
 \end{figure*}

\begin{figure*}[t]
    \centering
    \setlength{\tabcolsep}{1pt} 
    \resizebox{\textwidth}{!}{
        \begin{tabular}{ccccc}
            \includegraphics[width=0.18\linewidth]{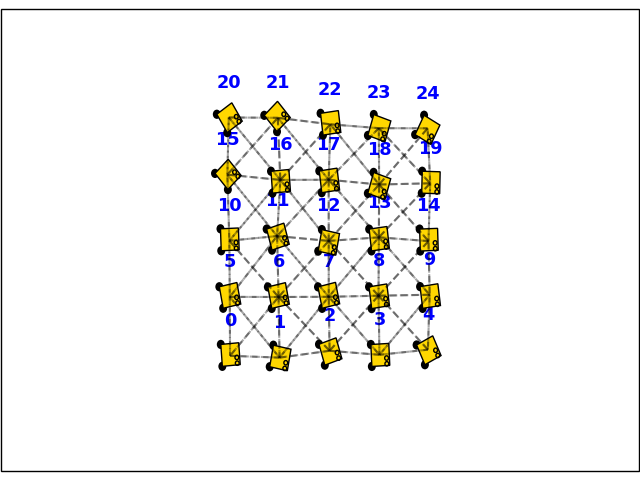} &
            \includegraphics[width=0.18\linewidth]{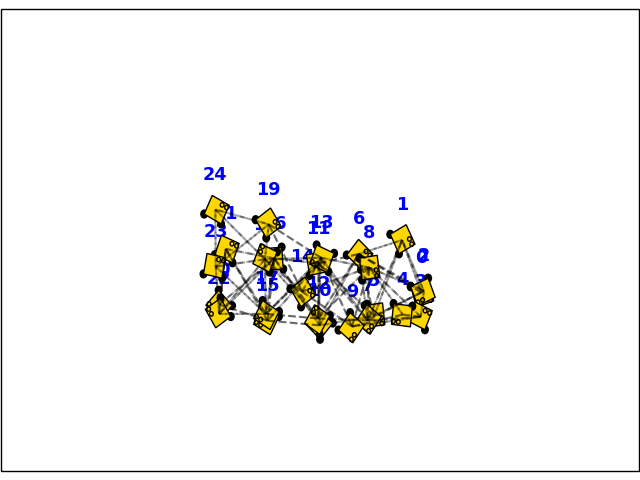} &
            \includegraphics[width=0.18\linewidth]{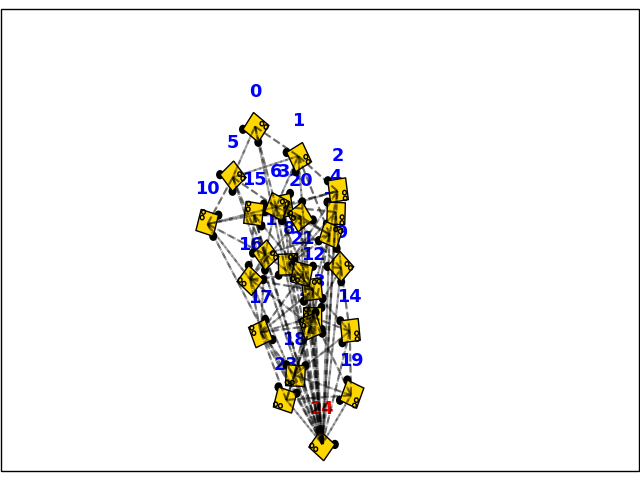} &
            \includegraphics[width=0.18\linewidth]{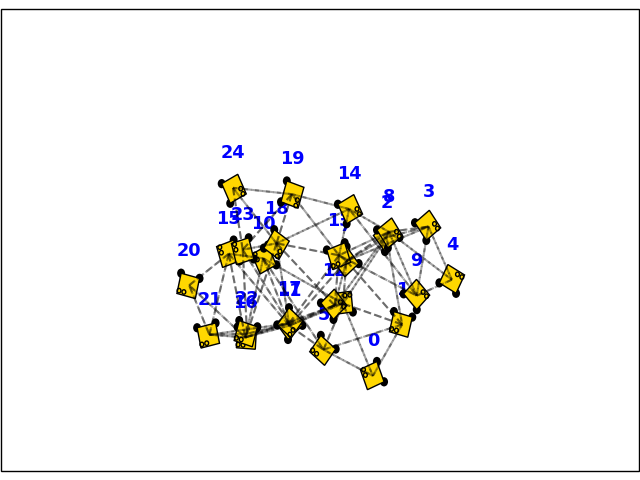} &
            \includegraphics[width=0.18\linewidth]{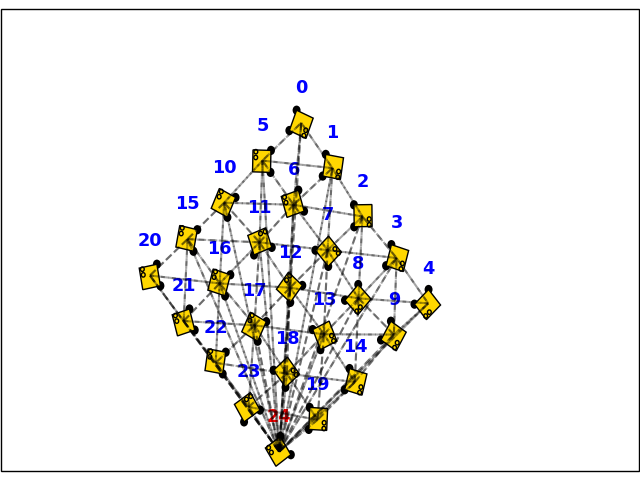} \\
            \footnotesize (a) Ground Truth &  \footnotesize (b) Baseline Localization & \footnotesize (c) Emitter-based Localization &  \footnotesize (d) S-1 Edge Localization & \footnotesize (e) DCL-sparse Localization
        \end{tabular}
    }
    \caption{Final localization results for distributed cooperative localization in $\mathbb{R}^2$. Given that our approach is relative localization, the orientation of the achieved formation shape of the estimated positions should be ignored.}
    \label{fig: formation}
\end{figure*}

\begin{figure*}[t]
    \centering
    \setlength{\tabcolsep}{1pt} 
    \resizebox{\textwidth}{!}{
        \begin{tabular}{ccc}
            \includegraphics[width=0.22\linewidth]{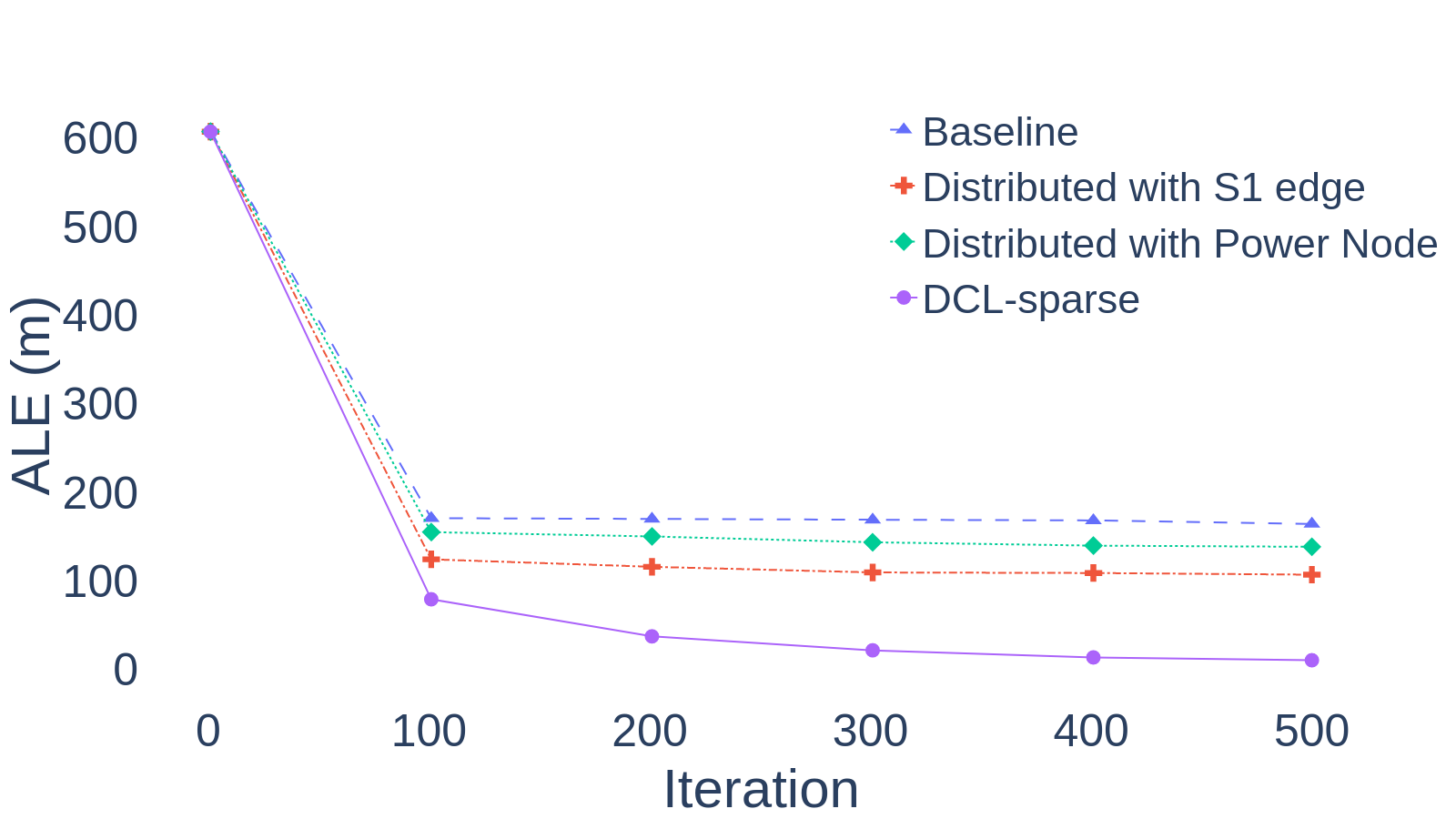} &
            \includegraphics[width=0.22\linewidth]{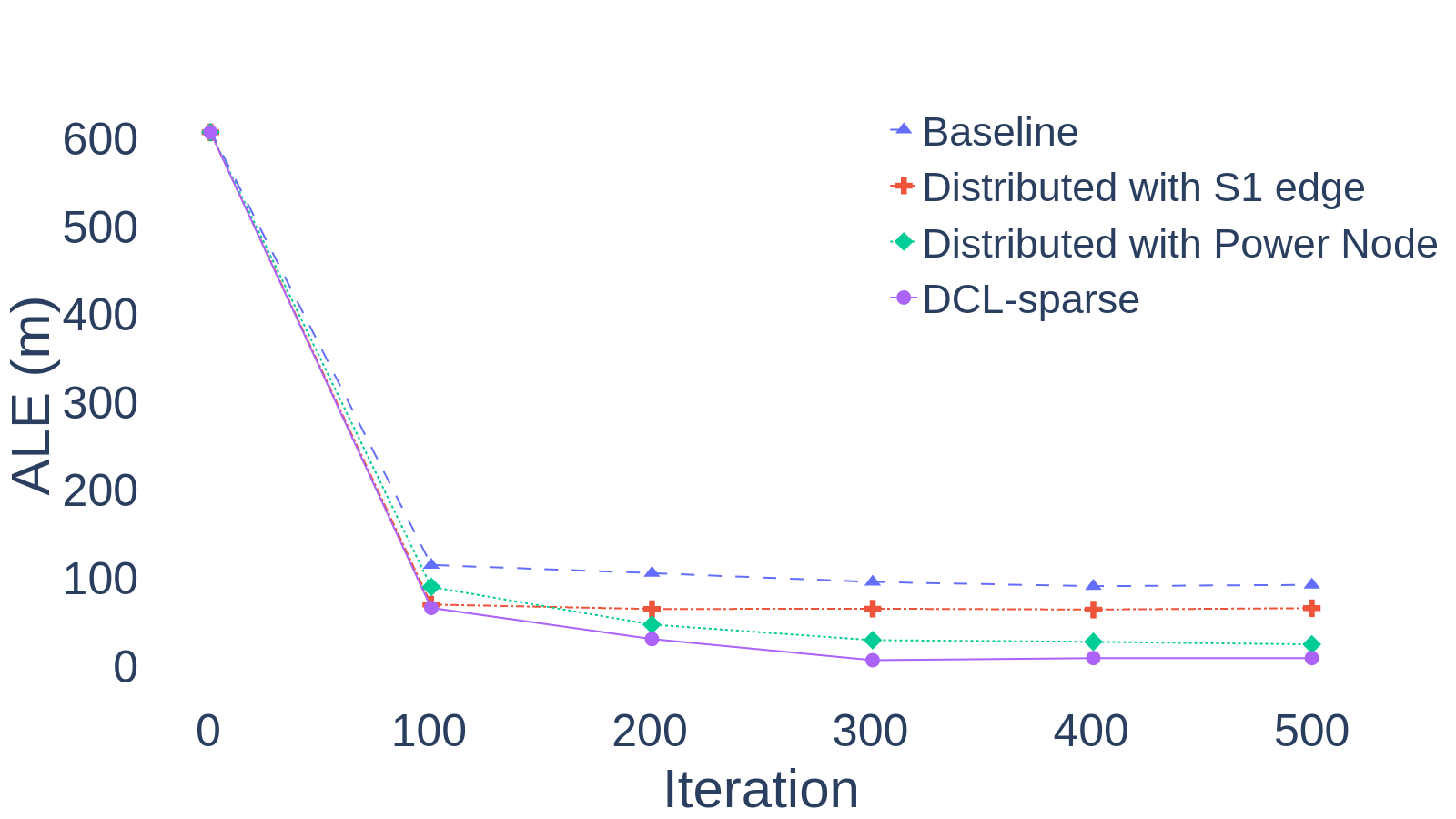} &
            \includegraphics[width=0.22\linewidth]{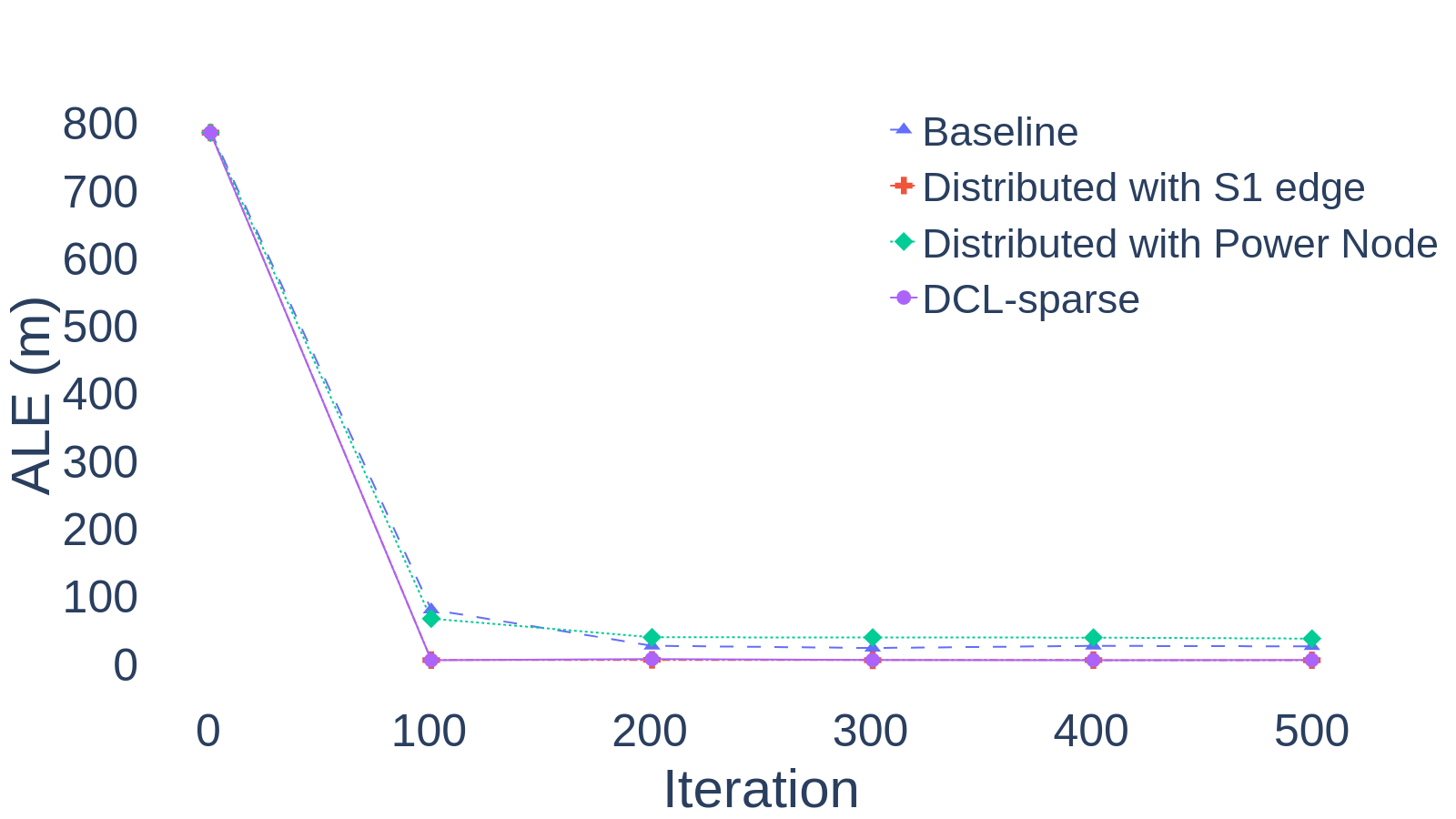} \\
            {\tiny (a) Radius 0.4 (non-ggr graph)} & \tiny (b) Radius 0.67 (at ggr threshold) & \tiny (c) Radius 1.5 (ggr graph) 
        \end{tabular}
    }
    \vspace{-2mm}
    \caption{Localization results with different graph sparsity. }
    \label{fig: sparsity}
    \vspace{-4mm}
\end{figure*}

\section{Experiments}
\noindent \textbf{Experiment Setup:}
In our experimental setup, we set up 25 robots in the Robotarium \cite{wilson2019robotarium} simulation-hardware testbed, which provides remote accessibility to the hardware multi-robot testbed.
By default, the robots are arranged in a square lattice, bound within the coordinates of the 3mx3m testbed. 
Due to the leaderless approach, we use the algorithm in \cite{calafiore2012distributed} as the baseline distributed localization. We also compare variants of our integrated DCL-Sparse approach without the influence of the Emitters (i.e., S1-Edge Localization) and the S1-Edges (i.e., Emitter-based Localization). 
In this case, we consider the emitter, and we test the different placements for the UAV. 
Initially, we set the parameters \( \eta_{kj} \) and \( \eta_{ij} \) to 0.05, the radius of the robot's range to 0.4, and the number of robots to 25 for our experiments. We define the accumulation of absolute localization error (ALE) as 
\begin{equation}
    ALE = \sum_{i=1}^{N} \sum_{\substack{j=1 \\ j \neq i}}^{N} | \|(\mathbf{x}_i - \mathbf{x}_j\| - \|(\hat{\mathbf{x}}_i - \hat{\mathbf{x}}_j)\| | ,
\end{equation}
where $\mathbf{x}$ is the position of robots expressed in the ground truth reference frame and $\hat{\mathbf{x}}$ is the outcome of the distributed localization. 

As proven in \cite{eren2004rigidity}, for a unit disk graph with a sensing radius \(R\) in a unit square area, if \(R>2\sqrt{2}\sqrt{\log(n)/n}\), with \(n\) being the number of robots, then there is a very high probability that the graph is GGR. 
We use this equation to set the sensing radius appropriately to simulate different sparsity conditions in the sensing graphs. 

\subsection{Impact of S1-Edge Gain on Localization Performance}
We examine how the ALE relates to $\beta$ in \eqref{eqn:n+} of the DCL-sparse with \( N=25 \) and \( N=49 \) robots. According to Theorem~\ref{thm:bounds}, increasing $n$ enhances network density and connectivity but also introduces more noise (signal interference), which can decrease the lower bound of ALE. Additionally, more robots generally lead to closer distances between them, resulting in a higher upper bound for ALE and a greater influence of shadow edges. Despite these factors, the ALE for the DCL-sparse algorithm remains low when the S1-Edge Gain is appropriately set within the bounds, as shown in Fig.~\ref{fig: Gain}(a), thereby validating our theoretical analysis. 
Accordingly, we set $\beta=0.5$ in further experiments.

\subsection{Optimal UAV Placement for Enhanced Network Coverage}
We determine the optimal placement for the UAV/Emitter node. We used N=25 and N=36 robots, and $R$ is set to be at the GGR threshold. Fig.~\ref{fig: Gain}(b) shows that the ALE decreases as the emitter node is placed further from the center of the robots' cluster. This improvement is attributed to enhanced sensing coverage, as moving the emitter (UAV) outward allows it to cover a larger portion of the network, extending its communication and sensing range. The wider coverage improves the density of the sensing graph, facilitating better information exchange and localization. Accordingly, we set one of the boundary nodes as UAV in further experiments.

\begin{figure*}[t]
    \centering
    \setlength{\tabcolsep}{1pt} 
    \resizebox{\linewidth}{!}{
        \begin{tabular}{cc}
            \includegraphics[width=0.49\linewidth]{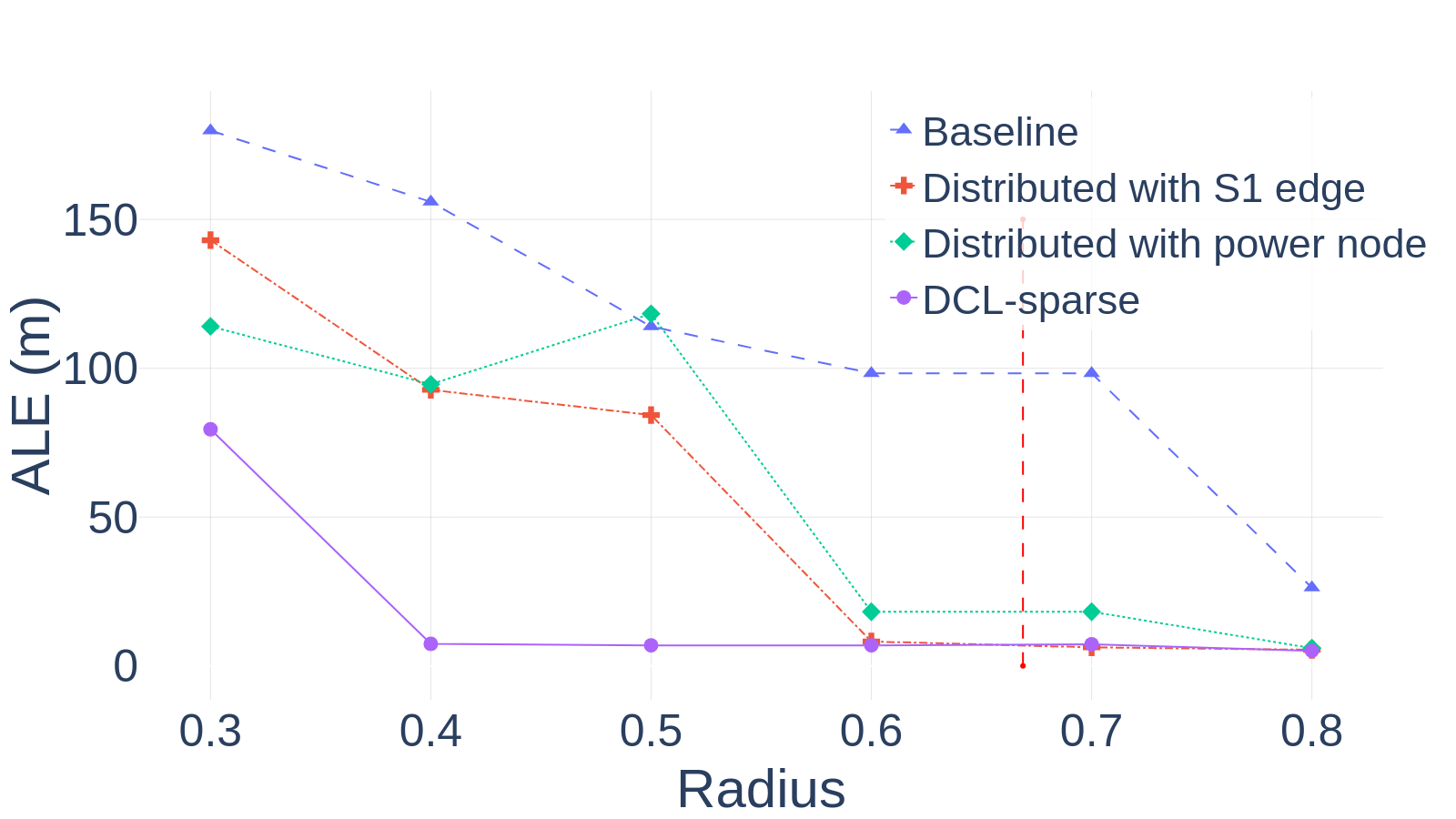} &
            \includegraphics[width=0.49\linewidth]{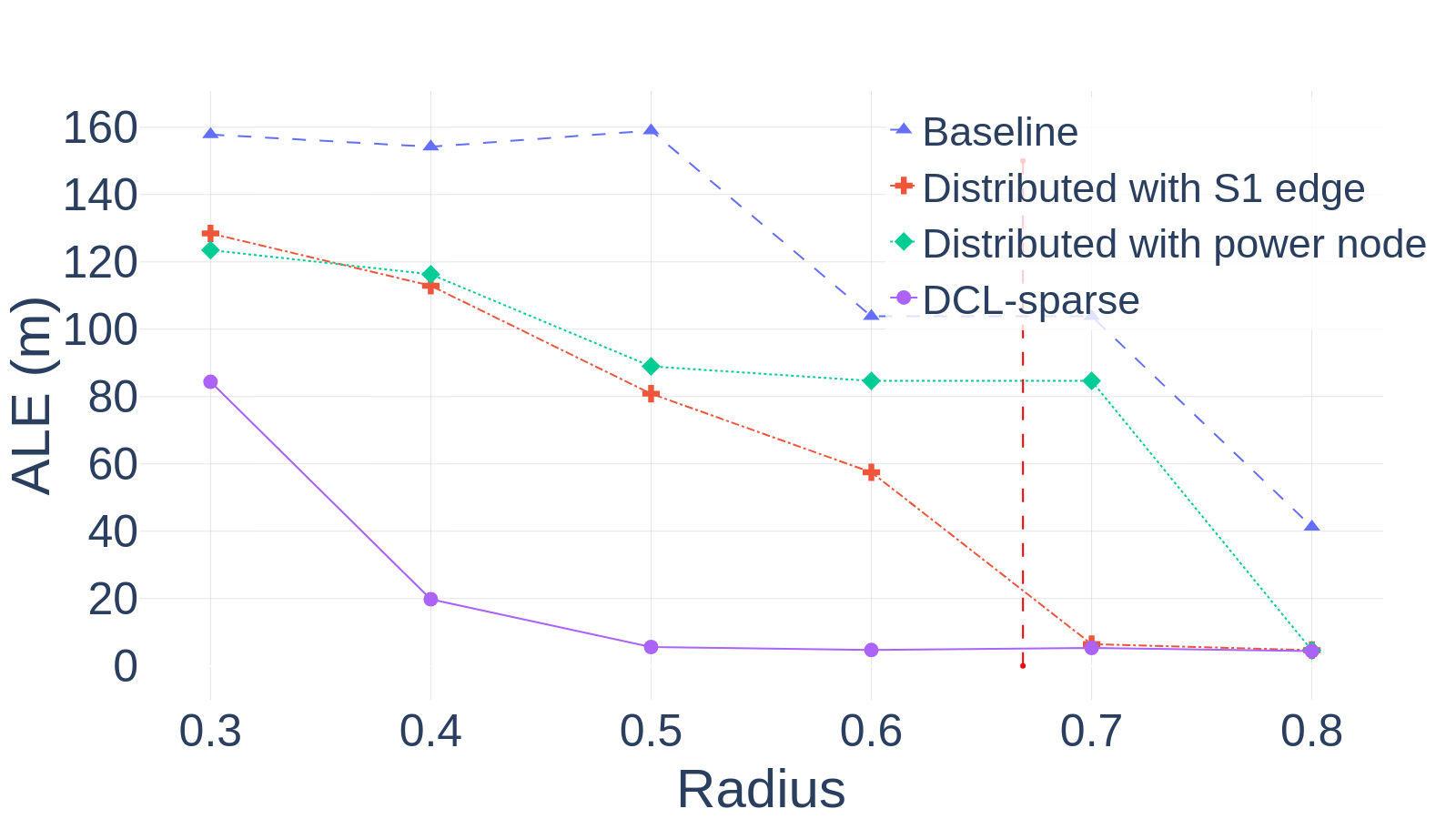} \\
            \footnotesize (a) Square shape formation & \footnotesize (b) Random shape formation \\
            \includegraphics[width=0.49\linewidth]{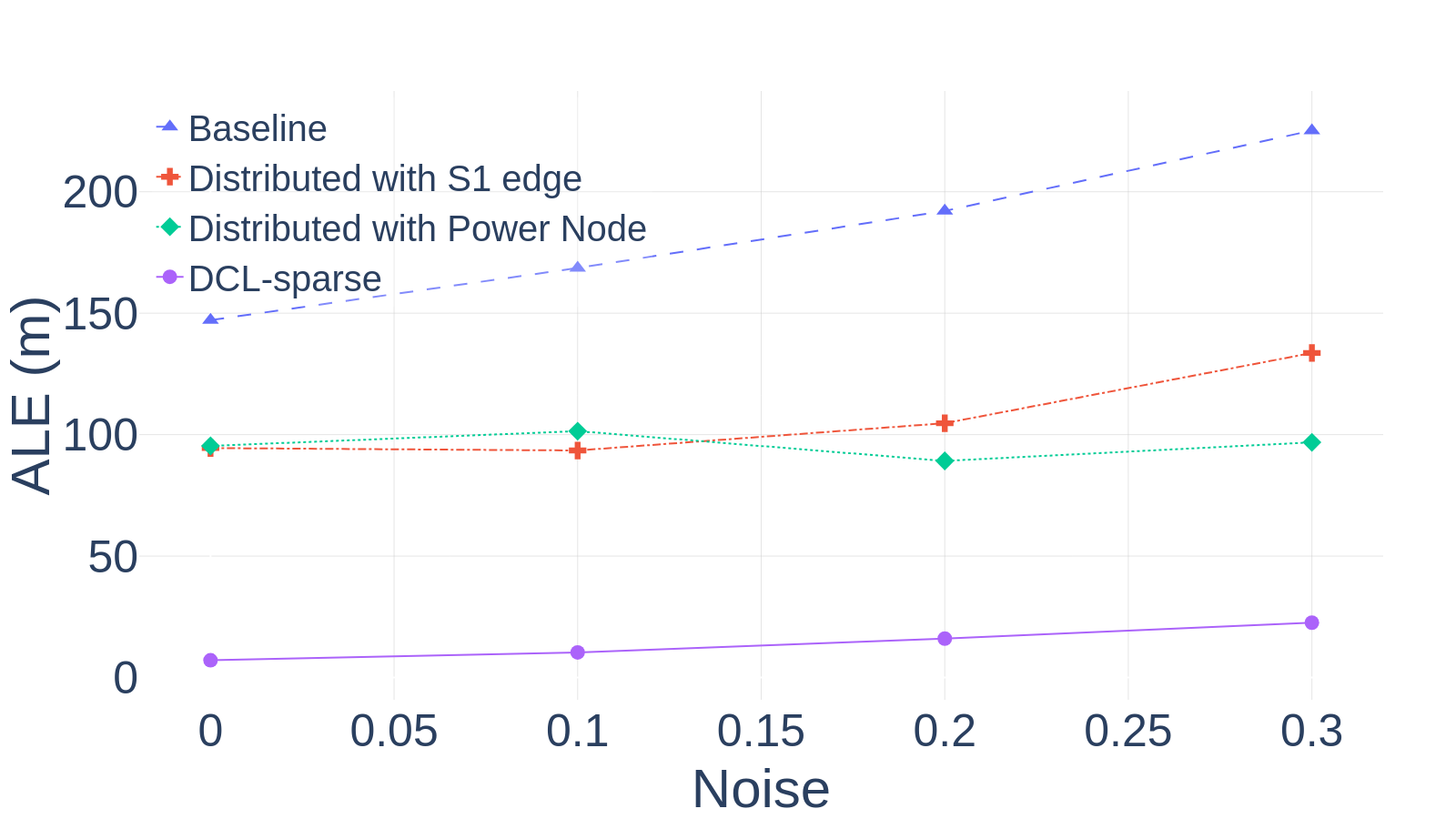} &
            \includegraphics[width=0.49\linewidth]{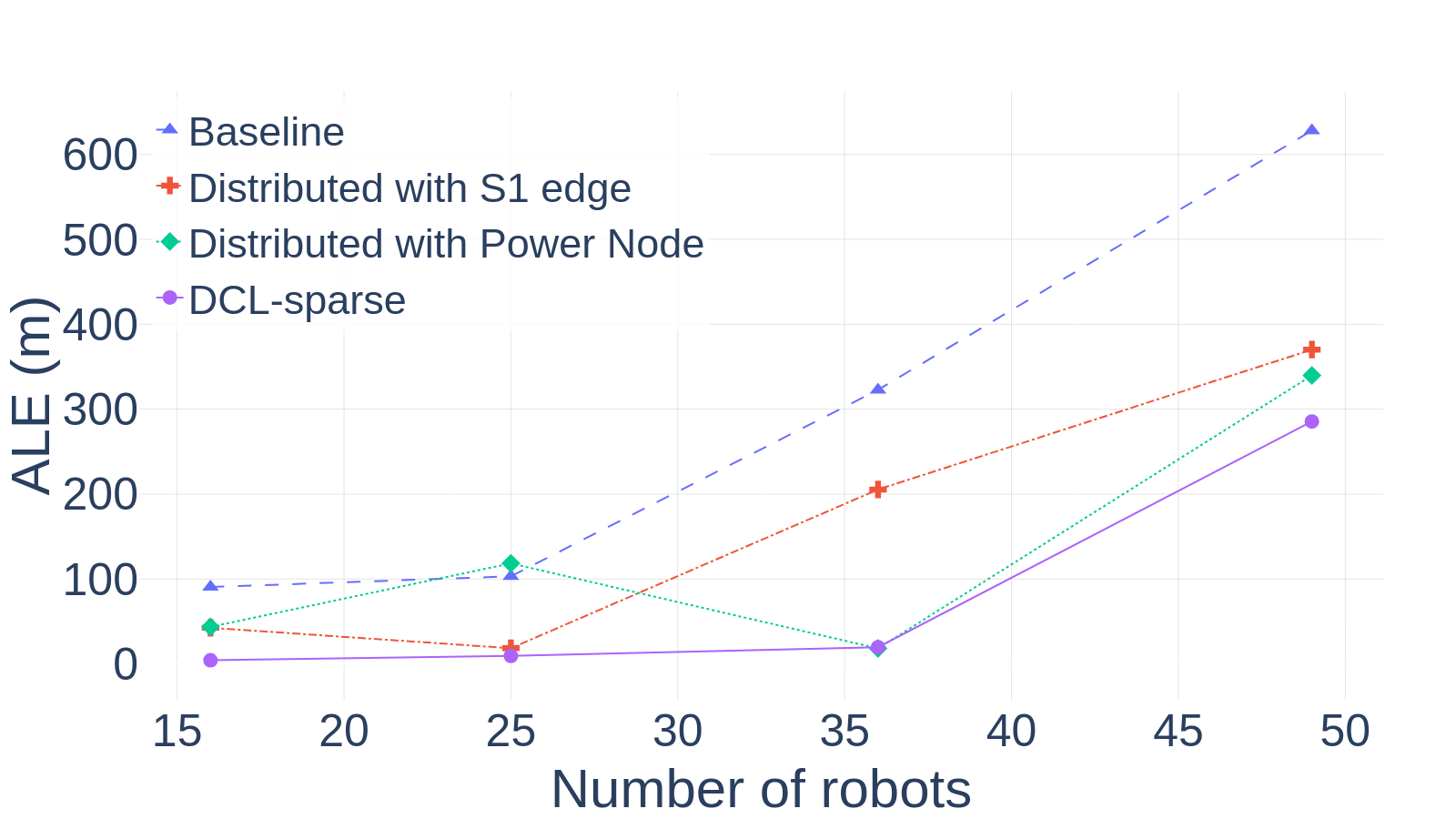} \\
            \footnotesize  (c) Noise impact &  \footnotesize  (d) Number of robots \\
        \end{tabular}
    }
    \caption{Impact of the robot configuration: square shape (a) and random shape (b) (with n=25 and $R(ggr) \leq 0.67$), sensing noise in sparse graphs with radius set to 0.4 ($<ggr$) (c), and the scalability with number of robots (d). }
    \label{fig: experiment}
    \vspace{-4mm}
\end{figure*}
\subsection{Performance, Scalability, and Formation Analysis}
We consider three scenarios in our experiments for rigorous analysis of the distributed localization algorithms.

\noindent \textbf{Qualitative Performance:} We present the formation results for different localization algorithms for a square-shaped initial configuration with a sensing radius set to exhibit a non-ggr graph. As shown in Fig.~\ref{fig: formation}, the baseline method suffers from both flipping and ambiguity. The S1-Edge algorithm also experiences flipping but performs better than the baseline. The emitter method exhibits flipping, primarily along the diagonal direction. In contrast, the DCL-sparse algorithm shows no flipping or ambiguity as it leverages the benefits of both the S1-Edge and the Emitter concepts.

\noindent \textbf{Impact of Graph Sparsity:} We assess the evolution of ALE under different graph rigidity conditions. Fig.~\ref{fig: sparsity} presents these results. Note that for radii where  \( 0.4 < \text{ggr} \)  and  \( 1.5 > \text{ggr} \), the DCL-Sparse approach demonstrates superior performance compared to other methods, providing better convergence (up to 95\% reduction in ALE). This is particularly notable when compared to the performance at radius \(0.4 \) (non-ggr sparse graph), where typical algorithms fail. The DCL-Sparse approach not only achieves optimal performance at this radius but also exhibits enhanced convergence across the broader range of radii.

\noindent \textbf{Impact of Robot Configuration:} We assess the ALE under different ground truths (initial configurations) and radius (sensing range). 
As the range of robots increases, the network becomes denser, providing more connections between robots. The DCL-sparse algorithm is designed to handle this increased connectivity efficiently. As shown in Fig.~\ref{fig: experiment}(a)-(b), the overall ALE for the DCL-sparse localization algorithm is lower than for the other methods in both configurations and converges to the minimum ALE faster as the range increases.

\noindent \textbf{Impact of Noise:} As shown in Fig.~\ref{fig: experiment}(c), adding Gaussian noise in the distance measurements (STD 0 to 0.3) results in only slight changes in the ALE for the proposed DCL-sparse localization algorithm (remains close to 0), whereas the baseline localization method experiences significant changes (from 150m to 225m). 
The baseline method is highly sensitive to noise, while the S1 Edge alone handled low-noise conditions well but suffered from increased noise levels due to its dependence on high-noise link measurements. The power node increases the sensing density, providing a balance between low ALE and consistent performance.
Thus, the DCL-sparse demonstrates superior robustness compared to the baseline localization method by leveraging the advantages of S1 Edge and Power node. 

\noindent \textbf{Scalability:} Fig.~\ref{fig: experiment}(d) shows that the baseline method struggles to scale with an increasing number of robots, showing significant degradation due to its reliance only on the local neighborhoods, and the ALE is compounded with more robots in the team. For small systems ($n \leq 25$ robots), the Power Node retains competitive performance but deteriorates beyond 35 robots. DCL-sparse algorithm demonstrates superior scalability as it maintains lower ALE even for a larger number of robots than other methods because it can better handle network density and computational challenges that typically increase after reaching a saturation point.

\par

\section{Conclusion}
This paper introduces a distributed multilayered approach for achieving distributed cooperation localization for MRS with sparse sensing graphs. It incorporates shadow edge techniques and UAV deployment to enhance robot swarm accuracy in GPS-denied environments. The DCL-Sparse, validated through rigorous theoretical and experimental analysis, outperformed existing methods in reducing localization across different graph sparsity settings, initial robot configurations, and sensor noise levels. The algorithm scales effectively with more robots, and the optimal UAV deployment enhances sensing coverage, further reducing errors.



\section*{Acknowledgements}
This research was sponsored by the Army Research Laboratory and was accomplished under Cooperative Agreement Number W911NF-17-2-0181. 
The views and conclusions contained in this document are those of the authors and should not be interpreted as representing the official policies, either expressed or implied, of the Army Research Laboratory or the U.S. Government. The U.S. Government is authorized to reproduce and distribute reprints for Government purposes notwithstanding any copyright notation herein.

\bibliographystyle{IEEEtran}
\bibliography{ref}

\end{document}